\def\eqref#1{equation~\ref{#1}}
\def\1{\bm{1}}
\DeclareMathAlphabet{\mathsfit}{\encodingdefault}{\sfdefault}{m}{sl}
\SetMathAlphabet{\mathsfit}{bold}{\encodingdefault}{\sfdefault}{bx}{n}
\newcommand\reallywidehat[1]{%
\savestack{\tmpbox}{\stretchto{%
  \scaleto{%
    \scalerel*[\widthof{\ensuremath{#1}}]{\kern.1pt\mathchar"0362\kern.1pt}%
    {\rule{0ex}{\textheight}}
  }{\textheight}%
}{2.4ex}}%
\stackon[-6.9pt]{#1}{\tmpbox}%
}
\definecolor{light}{rgb}{0.3, 0.3, 0.3}
\definecolor{revision}{rgb}{0, 0, 0}
\newtheorem{theorem}{Theorem}
\newtheorem{fact}[theorem]{Fact}
\title{Gradient-Guided Importance Sampling for Learning Binary Energy-Based Models}
\author{Meng Liu, Haoran Liu, Shuiwang Ji \\
Department of Computer Science \& Engineering \\
Texas A\&M University \\
College Station, TX 77843, USA \\
\texttt{\{mengliu,liuhr99,sji\}@tamu.edu}
}
\begin{document}

\maketitle

\begin{abstract}
Learning energy-based models (EBMs) is known to be difficult especially on discrete data where gradient-based learning strategies cannot be applied directly. Although ratio matching is a sound method to learn discrete EBMs, it suffers from expensive computation and excessive memory requirements, thereby resulting in difficulties in learning EBMs on high-dimensional data. Motivated by these limitations, in this study, we propose ratio matching with gradient-guided importance sampling (RMwGGIS). Particularly, we use the gradient of the energy function \emph{w.r.t.} the discrete data space to approximately construct the provably optimal proposal distribution, which is subsequently used by importance sampling to efficiently estimate the original ratio matching objective. We perform experiments on density modeling over synthetic discrete data, graph generation, and training Ising models to evaluate our proposed method. The experimental results demonstrate that our method can significantly alleviate the limitations of ratio matching, perform more effectively in practice, and scale to high-dimensional problems. Our implementation is available at \url{https://github.com/divelab/RMwGGIS}.
\end{abstract}

\section{Introduction}

Energy-Based models (EBMs), also known as unnormalized probabilistic models, model distributions by associating unnormalized probability densities. Such methods have been developed for decades~\citep{hopfield1982neural,ackley1985learning,cipra1987introduction,dayan1995helmholtz,zhu1998filters,hinton2012practical} and are unified as energy-based models (EBMs)~\citep{lecun2006tutorial} in the machine learning community. EBMs have great simplicity and flexibility since energy functions are not required to integrate or sum to one, thus enabling the usage of various energy functions. In practice, given different data types, we can parameterize the energy function with different neural networks as needed, such as multi-layer perceptrons (MLPs), convolutional neural networks (CNNs)~\citep{lecun1998gradient}, and graph neural networks (GNNs)~\citep{gori2005new,scarselli2008graph}. Recently, EBMs have been drawing increasing attention and are demonstrated to be effective in various domains, including images~\citep{ngiam2011learning,xie2016theory,du2019implicit}, videos~\citep{xie2017synthesizing}, texts~\citep{deng2020residual}, 3D objects~\citep{xie2018learning}, molecules~\citep{liu2021graphebm,hataya2021graph}, and proteins~\citep{du2020energy}.

Nonetheless, learning (\emph{a.k.a.}, training) EBMs is known to be challenging since we cannot compute the exact likelihood due to the intractable normalization constant. As reviewed in Section~\ref{sec:related_works}, many approaches have been proposed to learn EBMs, such as maximum likelihood training with MCMC sampling~\citep{hinton2002training} and score matching~\citep{hyvarinen2005estimation}. However, most recent advanced methods cannot be applied to discrete data directly since they usually leverage gradients over the continuous data space. For example, for many methods based on maximum likelihood training with MCMC sampling, they use the gradient \emph{w.r.t.} the data space to update samples in each MCMC step. However, if we update discrete samples using such gradient, the resulting samples are usually invalid in the discrete space. Therefore, learning EBMs on discrete data remains challenging.

Ratio matching~\citep{hyvarinen2007some} is a method to learn discrete EBMs on binary data by matching ratios of probabilities between the data distribution and the model distribution, as detailed in Section~\ref{sec:rm}. However, as analyzed in Section~\ref{sec:rm_limitation}, it requires expensive computations and excessive memory usages, which is infeasible if the data is high-dimensional.

In this work, we propose to use the gradient of the energy function \emph{w.r.t.} the discrete data space to guide the importance sampling for estimating the original ratio matching objective. More specifically, we use such gradient to approximately construct the provably optimal proposal distribution for importance sampling. Thus, the proposed approach is termed as ratio matching with gradient-guided importance sampling (RMwGGIS). Our RMwGGIS can significantly overcome the limitations of ratio matching. In addition, it is demonstrated to be more effective than the original ratio matching in practice. \textcolor{revision}{We perform extensive analysis for this improvement by connecting it with hard negative mining, and further propose an advanced version of RMwGGIS accordingly by reconsidering the importance weights.} Experimental results on synthetic discrete data, graph generation, and Ising model training demonstrate that our RMwGGIS significantly alleviates the limitations of ratio matching, achieves better performance with obvious margins, and has the ability of scaling to high-dimensional relevant problems.

\section{Preliminaries}

\subsection{Energy-Based Models}

Let $\boldsymbol{x}$ be a data point and $E_{\boldsymbol{\theta}}(\boldsymbol{x}) \in \mathbb{R}$ be the corresponding energy, where $\boldsymbol{\theta}$ represents the learnable parameters of the parameterized energy function $E_{\boldsymbol{\theta}}(\cdot)$. The probability density function of the model distribution is given as $p_{\boldsymbol{\theta}}(\boldsymbol{x}) = \frac{e^{-E_{\boldsymbol{\theta}}(\boldsymbol{x})}}{Z_{\boldsymbol{\theta}}} \propto e^{-E_{\boldsymbol{\theta}}(\boldsymbol{x})}$,
where $Z_{\boldsymbol{\theta}} \in \mathbb{R}$ is the normalization constant (\emph{a.k.a.}, partition function). To be specific, $Z_{\boldsymbol{\theta}}=\int e^{- E_{\boldsymbol{\theta}}(\boldsymbol{x})}d\boldsymbol{x}$ if $\boldsymbol{x}$ is in the continuous space and $Z_{\boldsymbol{\theta}}=\sum e^{- E_{\boldsymbol{\theta}}(\boldsymbol{x})}$ for discrete data. Hence, computing $Z_{\boldsymbol{\theta}}$ is usually infeasible due to the intractable integral or summation. Note that $Z_{\boldsymbol{\theta}}$ is a variable depending on $\boldsymbol{\theta}$ but a constant \emph{w.r.t.} $\boldsymbol{x}$.

\subsection{Ratio Matching}
\label{sec:rm}

Ratio matching~\citep{hyvarinen2007some} is developed for learning EBMs on binary discrete data by matching ratios of probabilities between the data distribution and the model distribution. Note that we focus on $d$-dimensional binary discrete data $\boldsymbol{x} \in \{0,1\}^d$ in this work.

Specifically, ratio matching considers the ratio of $p(\boldsymbol{x})$ and $p(\boldsymbol{x}_{-i})$, where $\boldsymbol{x}_{-i}=(x_1, x_2, \cdots, \bar{x}_i, \cdots, x_d)$ denotes a point in the data space obtained by flipping the $i$-th dimension of $\boldsymbol{x}$. The key idea is to force the ratios $\frac{p_{\boldsymbol{\theta}}(\boldsymbol{x})}{p_{\boldsymbol{\theta}}(\boldsymbol{x}_{-i})}$ defined by the model distribution $p_{\boldsymbol{\theta}}$ to be as close as possible to the ratios $\frac{p_{\mathcal{D}}(\boldsymbol{x})}{p_{\mathcal{D}}(\boldsymbol{x}_{-i})}$ given by the data distribution $p_{\mathcal{D}}$. The benefit of considering ratios of probabilities is that they do not involve the intractable normalization constant $Z_{\boldsymbol{\theta}}$ since  $\frac{p_{\boldsymbol{\theta}}(\boldsymbol{x})}{p_{\boldsymbol{\theta}}(\boldsymbol{x}_{-i})} = \frac{e^{-E_{\boldsymbol{\theta}}(\boldsymbol{x})}}{Z_{\boldsymbol{\theta}}} \cdot \frac{Z_{\boldsymbol{\theta}}}{e^{-E_{\boldsymbol{\theta}}(\boldsymbol{x}_{-i})}} = e^{E_{\boldsymbol{\theta}}(\boldsymbol{x}_{-i})-E_{\boldsymbol{\theta}}(\boldsymbol{x})}$. To achieve the match between ratios,~\citet{hyvarinen2007some} proposes to minimize the objective function
\begin{equation}
\label{Eq:rm_loss}
    \mathcal{J}_{RM}(\boldsymbol{\theta}) = \mathbb{E}_{\boldsymbol{x} \sim p_{\mathcal{D}}(\boldsymbol{x})} \sum_{i=1}^d \left[g\left(\frac{p_{\mathcal{D}}(\boldsymbol{x})}{p_{\mathcal{D}}(\boldsymbol{x}_{-i})}\right) - g\left(\frac{p_{\boldsymbol{\theta}}(\boldsymbol{x})}{p_{\boldsymbol{\theta}}(\boldsymbol{x}_{-i})}\right) \right]^2 + \left[g\left(\frac{p_{\mathcal{D}}(\boldsymbol{x}_{-i})}{p_{\mathcal{D}}(\boldsymbol{x})}\right) - g\left(\frac{p_{\boldsymbol{\theta}}(\boldsymbol{x}_{-i})}{p_{\boldsymbol{\theta}}(\boldsymbol{x})}\right) \right]^2.
\end{equation}

The sum of two square distances with the role of $\boldsymbol{x}$ and $\boldsymbol{x}_{-i}$ switched is specifically designed since it is essential for the following simplification. In addition, the function $g(u)=\frac{1}{1+u}$ is also carefully chosen in order to obtain the subsequent simplification. To compute the objective defined in Eq. (\ref{Eq:rm_loss}), it is known that the expectation over data distribution (\emph{i.e.}, $\mathbb{E}_{\boldsymbol{x} \sim p_{\mathcal{D}}(\boldsymbol{x})}$) can be unbiasedly estimated by the empirical mean of samples $\boldsymbol{x} \sim p_{\mathcal{D}}(\boldsymbol{x})$. However, to obtain the ratios between $p_{\mathcal{D}}(\boldsymbol{x})$ and $p_{\mathcal{D}}(\boldsymbol{x}_{-i})$ in Eq. (\ref{Eq:rm_loss}), the exact data distribution is required to be known, which is usually impossible.

Fortunately, thanks to the above carefully designed objective,~\citet{hyvarinen2007some} demostrates that the objective function in Eq. (\ref{Eq:rm_loss}) is equivalent to the following simplified version
\begin{equation}
\label{Eq:rm_sim_loss}
    \mathcal{J}_{RM}(\boldsymbol{\theta}) = \mathbb{E}_{\boldsymbol{x} \sim p_{\mathcal{D}}(\boldsymbol{x})} \sum_{i=1}^d \left[g\left(\frac{p_{\boldsymbol{\theta}}(\boldsymbol{x})}{p_{\boldsymbol{\theta}}(\boldsymbol{x}_{-i})}\right) \right]^2
\end{equation}
which does not require the data distribution to be known and can be easily computed by evaluating the energy of $\boldsymbol{x}$ and $\boldsymbol{x}_{-i}$, according to $\frac{p_{\boldsymbol{\theta}}(\boldsymbol{x})}{p_{\boldsymbol{\theta}}(\boldsymbol{x}_{-i})} = e^{E_{\boldsymbol{\theta}}(\boldsymbol{x}_{-i})-E_{\boldsymbol{\theta}}(\boldsymbol{x})}$. Further,~\citet{lyu2009interpretation} shows that the above objective function of ratio matching agree with the following objective
\begin{equation}
\label{Eq:rm_final_loss}
    \mathcal{J}_{RM}(\boldsymbol{\theta}) = \mathbb{E}_{\boldsymbol{x} \sim p_{\mathcal{D}}(\boldsymbol{x})} \sum_{i=1}^d \left[\frac{p_{\boldsymbol{\theta}}(\boldsymbol{x}_{-i})}{p_{\boldsymbol{\theta}}(\boldsymbol{x})} \right]^2 = \mathbb{E}_{\boldsymbol{x} \sim p_{\mathcal{D}}(\boldsymbol{x})} \sum_{i=1}^d \left[e^{E_{\boldsymbol{\theta}}(\boldsymbol{x})-E_{\boldsymbol{\theta}}(\boldsymbol{x}_{-i})} \right]^2.
\end{equation}
They agree with each other since function $g(\cdot)$ decreases monotonically in $[0, +\infty)$, which aligns with the value range of probability ratios $\frac{p_{\boldsymbol{\theta}}(\boldsymbol{x})}{p_{\boldsymbol{\theta}}(\boldsymbol{x}_{-i})}$. Note that Eq.~(\ref{Eq:rm_final_loss}) is originally named as the \textit{discrete extension of generalized score matching} in~\citet{lyu2009interpretation}. Since it agrees with Eq.~(\ref{Eq:rm_sim_loss}) and includes ratios of probabilities, we also treat it as an objective function of ratio matching in our context. In the rest of this paper, we provide our analysis and develop our method based on Eq.~(\ref{Eq:rm_final_loss}) for clarity. Nonetheless, our proposed method below can be naturally performed on Eq.~(\ref{Eq:rm_sim_loss}) as well.

Intuitively, the objective function of ratio matching, as formulated in Eq.~(\ref{Eq:rm_final_loss}) or Eq.~(\ref{Eq:rm_sim_loss}), can push down the energy of the training sample $\boldsymbol{x}$ and push up the energies of other data points obtained by flipping one dimension of $\boldsymbol{x}$. Thus, this objective faithfully expect that each training sample $\boldsymbol{x}$ has higher probability than its local neighboring points that 
are hamming distance $1$ from $\boldsymbol{x}$. 

\textcolor{revision}{
It is worth mentioning that ratio matching is not suitable for binary image data. This is because it faithfully treats $\boldsymbol{x}$ as a positive sample and $\boldsymbol{x}_{-i}$ for $i=1,2,\cdots,d$ as negative samples. However, this assumption does not hold for binary image data. Specifically, if we flip one dimension/pixel of a binary image such as a digit image from MNIST, the resulting image is almost unchanged and is still a positive sample in nature. Thus, it is not reasonable to use ratio matching to train EBMs on binary images. Similarly, our RMwGGIS presented below also has this limitation. Having said this, our RMwGGIS is still an effective and efficient method for a broad of binary discrete data. We leave extending its applicability to general discrete EBMs for future work.}

\section{The Proposed Method}
\label{sec:method}

We first analyze the limitations of the ratio matching method from the perspective of computational time and memory usage. Then, we describe our proposed method, ratio matching with gradient-guided importance sampling (RMwGGIS), which uses the gradient of the energy function \emph{w.r.t.} the discrete input  $\boldsymbol{x}$ to guide the importance sampling for estimating the original ratio matching objective. Our approach can alleviate the limitations significantly and is shown to be more effective in practice.

\subsection{Analysis of Ratio Matching}
\label{sec:rm_limitation}

\textbf{Time-intensive computations.} Based on Eq. (\ref{Eq:rm_final_loss}), for a sample $\boldsymbol{x}$, we have to compute the energies for all $\boldsymbol{x}_{-i}$, where $i=1,\cdots,d$. This needs $\mathcal{O}(d)$ evaluations of the energy function for each training sample. This is computationally intensive, especially when the data dimension $d$ is large.

\textbf{Excessive memory usages.} The memory usage of ratio matching is another limitation that cannot be ignored, especially when we learn the energy function using modern GPUs with limited memory. As shown in Eq. (\ref{Eq:rm_final_loss}), the objective function consists of $d$ terms for each training sample. When we do backpropagation, computing the gradient of the objective function \emph{w.r.t.} the learnable parameters of the energy function is required. Therefore, in order to compute such gradient, we have to store the whole computational graph and the intermediate tensors for all of the $d$ terms, thereby leading to excessive memory usages especially if the data dimension $d$ is large. 

\subsection{Ratio Matching with Gradient-Guided Importance Sampling}
\label{sec:RMwGGIS}

The key idea is to use the well-known importance sampling technique to reduce the variance of estimating $\mathcal{J}_{RM}(\boldsymbol{\theta})$ with fewer than $d$ terms. The most critical and challenging part of using the importance sampling technique is choosing a good proposal distribution. In this work, we propose to use the gradient of the energy function \emph{w.r.t.} the discrete input $\boldsymbol{x}$ to approximately construct the optimal proposal distribution for importance sampling. We describe the details of our method below.

The objective for each sample $\boldsymbol{x}$, defined by Eq. (\ref{Eq:rm_final_loss}), can be reformulated as
\begin{equation}
\label{Eq:rm_expectation}
    \mathcal{J}_{RM}(\boldsymbol{\theta}, \boldsymbol{x}) = d\sum_{i=1}^d \frac{1}{d} \left[e^{E_{\boldsymbol{\theta}}(\boldsymbol{x})-E_{\boldsymbol{\theta}}(\boldsymbol{x}_{-i})} \right]^2 = d \mathbb{E}_{\boldsymbol{x}_{-i} \sim m(\boldsymbol{x}_{-i})} \left[e^{E_{\boldsymbol{\theta}}(\boldsymbol{x})-E_{\boldsymbol{\theta}}(\boldsymbol{x}_{-i})} \right]^2,
\end{equation}
where $m(\boldsymbol{x}_{-i})=\frac{1}{d}$ for $i=1,\cdots,d$ is a discrete distribution. Thus, the objective of ratio matching for each sample $\boldsymbol{x}$ can be viewed as the expectation of $\left[e^{E_{\boldsymbol{\theta}}(\boldsymbol{x})-E_{\boldsymbol{\theta}}(\boldsymbol{x}_{-i})} \right]^2$ over the discrete distribution $m(\boldsymbol{x}_{-i})$. In the original ratio matching method, as described in Section~\ref{sec:rm}, we compute such expectation exactly by considering all possible $\boldsymbol{x}_{-i}$, leading to expensive computations and excessive memory usages. Naturally, we can estimate the desired expectation with Monte Carlo method by considering fewer terms sampled based on $m(\boldsymbol{x}_{-i})$. However, such estimation usually has a high variance, and is empirically verified to be ineffective by our experiments in Section~\ref{Sec:exp}.

Further, we can apply the importance sampling method to reduce the variance of Monte Carlo estimation. Intuitively, certain values have more impact on the expectation than others. Hence, the estimator variance can be reduced if such important values are sampled more frequently than others. To be specific, instead of sampling based on the distribution $m(\boldsymbol{x}_{-i})$, importance sampling aims to sample from another distribution $n(\boldsymbol{x}_{-i})$, namely, proposal distribution. Formally,
\begin{equation}
\label{Eq:is}
    \mathcal{J}_{RM}(\boldsymbol{\theta}, \boldsymbol{x}) = d \mathbb{E}_{\boldsymbol{x}_{-i} \sim m(\boldsymbol{x}_{-i})} \left[e^{E_{\boldsymbol{\theta}}(\boldsymbol{x})-E_{\boldsymbol{\theta}}(\boldsymbol{x}_{-i})} \right]^2 = d \mathbb{E}_{\boldsymbol{x}_{-i} \sim n(\boldsymbol{x}_{-i})} \frac{m(\boldsymbol{x}_{-i}) \left[e^{E_{\boldsymbol{\theta}}(\boldsymbol{x})-E_{\boldsymbol{\theta}}(\boldsymbol{x}_{-i})} \right]^2}{n(\boldsymbol{x}_{-i})}. 
\end{equation}
A short derivation of Eq. (\ref{Eq:is}) is in Appendix~\ref{app:eq_derive}. Afterwards, we can apply Monte Carlo estimation based on the proposal distribution $n(\boldsymbol{x}_{-i})$. Specifically, we sample $s$ terms, denoted as $\boldsymbol{x}_{-i}^{(1)}, \cdots, \boldsymbol{x}_{-i}^{(s)}$, according to the proposal distribution $n(\boldsymbol{x}_{-i})$. Note that $s$ is usually chosen to be much smaller than $d$. Then the estimation for $\mathcal{J}_{RM}(\boldsymbol{\theta}, \boldsymbol{x})$ is computed based on these $s$ terms. Formally,
\begin{equation}
\label{Eq:is_expectation}
    \reallywidehat{\mathcal{J}_{RM}(\boldsymbol{\theta}, \boldsymbol{x})}_{n} = d \frac{1}{s} \sum_{t=1}^s \frac{m(\boldsymbol{x}_{-i}^{(t)}) \left[e^{E_{\boldsymbol{\theta}}(\boldsymbol{x})-E_{\boldsymbol{\theta}}(\boldsymbol{x}_{-i}^{(t)})} \right]^2}{n(\boldsymbol{x}_{-i}^{(t)})}, \hfill \quad \boldsymbol{x}_{-i}^{(t)} \sim n(\boldsymbol{x}_{-i}).
\end{equation}

\textcolor{revision}{To make it clear, we stop the gradient flowing through the importance weights during back-propagation.} It is known that the estimator obtained by Monte Carlo estimation with importance sampling is an unbiased estimator, as the conventional Monte Carlo estimator. The key point of importance sampling is to choose an appropriate proposal distribution $n(\boldsymbol{x}_{-i})$, which determines the variance of the corresponding estimator. Based on~\citet{robert1999monte}, the optimal proposal distribution $n^*(\boldsymbol{x}_{-i})$, which yields the minimum variance, is given by the following fact.

\begin{fact}
Let $n^*(\boldsymbol{x}_{-i})=\frac{\left[e^{E_{\boldsymbol{\theta}}(\boldsymbol{x})-E_{\boldsymbol{\theta}}(\boldsymbol{x}_{-i})} \right]^2}{\sum_{k=1}^d \left[e^{E_{\boldsymbol{\theta}}(\boldsymbol{x})-E_{\boldsymbol{\theta}}(\boldsymbol{x}_{-k})} \right]^2}$ be a discrete distribution on $\boldsymbol{x}_{-i}$, where $i=1,\cdots,d$. Then for any discrete distribution $n(\boldsymbol{x}_{-i})$ on $\boldsymbol{x}_{-i}$, where $i=1,\cdots,d$, we have $Var\left(\reallywidehat{\mathcal{J}_{RM}(\boldsymbol{\theta}, \boldsymbol{x})}_{n^*}\right) \leq Var\left(\reallywidehat{\mathcal{J}_{RM}(\boldsymbol{\theta}, \boldsymbol{x})}_{n}\right)$.
\end{fact}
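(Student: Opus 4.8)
The plan is to reduce the claim to a finite-dimensional optimization over the proposal weights and then settle it with a single application of the Cauchy--Schwarz inequality. Throughout, write $f_i = \left[e^{E_{\boldsymbol{\theta}}(\boldsymbol{x})-E_{\boldsymbol{\theta}}(\boldsymbol{x}_{-i})}\right]^2 > 0$, $m_i = m(\boldsymbol{x}_{-i}) = 1/d$, and $n_i = n(\boldsymbol{x}_{-i})$, so that $\mathcal{J}_{RM}(\boldsymbol{\theta},\boldsymbol{x}) = \sum_{i=1}^d f_i$ by Eq.~(\ref{Eq:rm_expectation}) and $n^*_i = f_i / \sum_{k=1}^d f_k$. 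First I would use that the estimator in Eq.~(\ref{Eq:is_expectation}) is an average of $s$ i.i.d. draws, each unbiased for $\mathcal{J}_{RM}(\boldsymbol{\theta},\boldsymbol{x})$, so that $\mathrm{Var}\!\left(\reallywidehat{\mathcal{J}_{RM}(\boldsymbol{\theta}, \boldsymbol{x})}_{n}\right) = \frac{1}{s}\,\mathrm{Var}_{\,i\sim n}(d\,m_i f_i/n_i)$. Since the factor $1/s$ is common to all proposals, it suffices to compare the single-draw variances.

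Second, by unbiasedness the single-draw variance equals its second moment minus the square of the common mean: $\mathrm{Var}_{\,i\sim n}(d m_i f_i/n_i) = \mathbb{E}_{i\sim n}[(d m_i f_i/n_i)^2] - \mathcal{J}_{RM}(\boldsymbol{\theta},\boldsymbol{x})^2$. The subtracted term does not depend on $n$, so minimizing the variance is equivalent to minimizing the second moment $M(n) = \sum_{i=1}^d n_i (d m_i f_i/n_i)^2 = d^2\sum_{i=1}^d m_i^2 f_i^2 / n_i$, which with $m_i = 1/d$ collapses to $\sum_{i=1}^d f_i^2/n_i$.

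Third, I would apply Cauchy--Schwarz in the form $\left(\sum_i f_i\right)^2 = \left(\sum_i \tfrac{f_i}{\sqrt{n_i}}\sqrt{n_i}\right)^2 \le \left(\sum_i \tfrac{f_i^2}{n_i}\right)\left(\sum_i n_i\right) = \sum_i \tfrac{f_i^2}{n_i}$, using $\sum_i n_i = 1$. This gives the uniform lower bound $M(n) \ge \left(\sum_i f_i\right)^2$ valid for every proposal $n$. A direct substitution then shows $M(n^*) = \sum_i f_i^2 \cdot \tfrac{\sum_k f_k}{f_i} = \left(\sum_k f_k\right)^2$, so $n^*$ attains the bound; equivalently, the Cauchy--Schwarz equality condition $n_i \propto f_i$ is met exactly by $n^*$. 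Hence $M(n^*) \le M(n)$ for all $n$, which translates back through the previous two steps to the claimed variance inequality.

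The argument is short, and the only point I would flag is the support/positivity bookkeeping: since every $f_i$ is strictly positive, the reduction implicitly requires $n_i > 0$ for all $i$, as a proposal omitting some coordinate yields an undefined or infinite-variance estimator for which the inequality is vacuous. It is also worth double-checking the algebra carrying the $d$ and $m_i = 1/d$ factors through $M(n)$, as this is where a stray factor could slip in. A consistency check falls out for free: because $f_i > 0$, equality in Cauchy--Schwarz gives $M(n^*) = \mathcal{J}_{RM}(\boldsymbol{\theta},\boldsymbol{x})^2$, so the single-draw variance under $n^*$ is in fact exactly zero, meaning the optimal proposal eliminates the variance entirely rather than merely minimizing it. If preferred, the same minimizer follows from a Lagrange-multiplier computation on $M(n)$ subject to $\sum_i n_i = 1$, but Cauchy--Schwarz delivers both the bound and the equality case at once.
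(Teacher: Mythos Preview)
Your proof is correct and follows essentially the same route as the paper's own argument: reduce the estimator variance to a single-draw second moment (the mean being common by unbiasedness), then apply Cauchy--Schwarz with the factorization $f_i = (f_i/\sqrt{n_i})\cdot\sqrt{n_i}$ to show the second moment is minimized at $n_i\propto f_i$. Your presentation is slightly cleaner notationally, and your added remark that the optimal proposal in fact yields \emph{zero} single-draw variance (since $d\,m_i f_i/n^*_i$ is the constant $\sum_k f_k$) is a nice consistency check the paper does not make explicit.
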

\begin{proof}
The proof is included in Appendix~\ref{app:th1_proof}.
\end{proof}

To construct the exact optimal proposal distribution $n^*(\boldsymbol{x}_{-i})$, we still have to evaluate the energies of all $\boldsymbol{x}_{-i}$, where $i=1,\cdots,d$. To avoid such complexity, we propose to leverage the gradient of the energy function \emph{w.r.t.} the discrete input $\boldsymbol{x}$ to approximately construct the optimal proposal distribution. Only $\mathcal{O}(1)$ evaluations of the energy function is needed to construct the proposal distribution.

It is observed by~\citet{grathwohl2021oops} that many discrete distributions are implemented as continuous and differentiable functions, although they are evaluated only in discrete domains.~\citet{grathwohl2021oops} further proposes a scalable sampling method for discrete distributions by using the gradients of the underlying continuous functions \emph{w.r.t.} the discrete input. In this study, we extend this idea to improve ratio matching. More specifically, in our case, even though our input $\boldsymbol{x}$ is discrete, our parameterized energy function $E_{\boldsymbol{\theta}}(\cdot)$, such as a neural network, is usually continuous and differentiable. Hence, we can use such gradient information to efficiently and approximately construct the optimal proposal distribution given by Fact 1.

The basic idea is that we can approximate $E_{\boldsymbol{\theta}}(\boldsymbol{x}_{-i})$ based on the Taylor series of $E_{\boldsymbol{\theta}}(\cdot)$ at $\boldsymbol{x}$, given that $\boldsymbol{x}_{-i}$ is close to $\boldsymbol{x}$ in the data space because they only have differences in one dimension\footnote{We have this assumption because data space is usually high-dimensional. If the number of data dimension is small, we can simply use the original ratio matching method since time and memory are affordable in this case.}. Formally,
\begin{equation}
\begin{aligned}
\label{Eq:taylor}
    E_{\boldsymbol{\theta}}(\boldsymbol{x}_{-i}) \approx E_{\boldsymbol{\theta}}(\boldsymbol{x})+\left(\boldsymbol{x}_{-i} - \boldsymbol{x}\right)^T \nabla_{\boldsymbol{x}} E_{\boldsymbol{\theta}}(\boldsymbol{x}).
\end{aligned}
\end{equation}

Thus, we can approximately obtain the desired term $E_{\boldsymbol{\theta}}(\boldsymbol{x})-E_{\boldsymbol{\theta}}(\boldsymbol{x}_{-i})$ in Fact 1 using Eq. (\ref{Eq:taylor}). Note that $\nabla_{\boldsymbol{x}} E_{\boldsymbol{\theta}}(\boldsymbol{x}) \in \mathbb{R}^d$ contains the information for approximating all $E_{\boldsymbol{\theta}}(\boldsymbol{x})-E_{\boldsymbol{\theta}}(\boldsymbol{x}_{-i})$, where $i=1,\cdots,d$. Hence, we can consider the following $d$-dimensional vector
\begin{equation}
\begin{aligned}
    \left(2\boldsymbol{x} - 1\right) \odot \nabla_{\boldsymbol{x}} E_{\boldsymbol{\theta}}(\boldsymbol{x}) \quad \in \mathbb{R}^d,
\end{aligned}
\end{equation}
where $\odot$ denotes the element-wise multiplication. Note that we have $x_i-\bar{x}_i=-1$ if $x_i=0$ and $x_i-\bar{x}_i=1$ if $x_i=1$, which can be unified as $x_i-\bar{x}_i=2x_i-1$. Therefore, we have
\begin{equation}
    E_{\boldsymbol{\theta}}(\boldsymbol{x}) - E_{\boldsymbol{\theta}}(\boldsymbol{x}_{-i}) 
    \approx \left[\left(2\boldsymbol{x} - 1\right) \odot \nabla_{\boldsymbol{x}} E_{\boldsymbol{\theta}}(\boldsymbol{x})\right]_i, \hfill i=1,\cdots,d. \\
\end{equation}
Afterwards, we can provide a proposal distribution $\widetilde n^*(\boldsymbol{x}_{-i})$ as an approximation of the optimal proposal distribution $n^*(\boldsymbol{x}_{-i})$ given by Fact 1. Formally,
\begin{equation}
\begin{aligned}
\label{Eq:approx_pd}
    \widetilde n^*(\boldsymbol{x}_{-i}) = \frac{\left[e^{2\left(2\boldsymbol{x} - 1\right) \odot \nabla_{\boldsymbol{x}} E_{\boldsymbol{\theta}}(\boldsymbol{x})}\right]_i}{\sum_{k=1}^d \left[e^{2\left(2\boldsymbol{x} - 1\right) \odot \nabla_{\boldsymbol{x}} E_{\boldsymbol{\theta}}(\boldsymbol{x})}\right]_k}, \enspace i=1,\cdots,d.
\end{aligned}
\end{equation}
Then $\widetilde n^*(\boldsymbol{x}_{-i})$ is used as the proposal distribution for Monte  Carlo  estimation  with  importance  sampling, as in Eq. (\ref{Eq:is_expectation}). The overall process of our RMwGGIS method is summarized in Algorithm~\ref{alg:RMwGGIS}.

\begin{algorithm}[t]
\caption{Ratio Matching with Gradient-Guided Importance Sampling (RMwGGIS)}\label{alg:RMwGGIS}
\begin{algorithmic}[1]
    \State \textbf{Input:} Observed dataset $\mathcal{D}=\left\{\boldsymbol{x}^{(m)}\right\}_{m=1}^{|\mathcal{D}|}$, parameterized energy function $E_{\boldsymbol{\theta}}(\cdot)$, number of samples $s$ for Monte Carlo estimation with importance sampling
    \For{$\boldsymbol{x} \sim \mathcal{D}$} \Comment{Batch training is applied in practice}
    \State Compute $E_{\boldsymbol{\theta}}(\boldsymbol{x})$
    \State Compute $\nabla_{\boldsymbol{x}}  E_{\boldsymbol{\theta}}(\boldsymbol{x})$
    \State Compute the proposal distribution $ \widetilde n^*(\boldsymbol{x}_{-i}) $ \Comment{Eq. (\ref{Eq:approx_pd})} 
    \State Sample $s$ terms, denoted as $\boldsymbol{x}_{-i}^{(1)}, \cdots, \boldsymbol{x}_{-i}^{(s)}$, according to $\widetilde n^*(\boldsymbol{x}_{-i})$
    \State Compute $\reallywidehat{\mathcal{J}_{RM}(\boldsymbol{\theta}, \boldsymbol{x})}_{\widetilde n^*} $ \Comment{Eq. (\ref{Eq:is_expectation}) (or Eq. (\ref{Eq:biased_version}))} 
    \State Update $\boldsymbol{\theta}$ based on $\nabla_{\boldsymbol{\theta}} \reallywidehat{\mathcal{J}_{RM}(\boldsymbol{\theta}, \boldsymbol{x})}_{\widetilde n^*}$
    \EndFor
\end{algorithmic}
\end{algorithm}

\subsection{Comparison between Ratio Matching and RMwGGIS}
\label{Sec:comparison}

\textbf{Time and memory.} Since only $s \ (s<d)$ terms are considered in the objective function of our RMwGGIS, as shown in Eq. (\ref{Eq:is_expectation}), we have better computational efficiency and less memory requirement compared to the original ratio matching method. To be specific, our RMwGGIS only needs $\mathcal{O}(s)$ evaluations of the energy function compared with $\mathcal{O}(d)$ in ratio matching, leading to a linear speedup, which is significant especially when the data is high-dimensional. The improvement in terms of memory usage is similar. In Section~\ref{sec:synth_exp}, we compare the real running time and memory usage between ratio matching and our proposed RMwGGIS on datasets with different data dimensions.

\textbf{Better optimization?} In Section~\ref{sec:RMwGGIS}, we propose our RMwGGIS based on the motivation to approximate the objective of ratio matching with fewer terms. Although our RMwGGIS can approximate the original ratio matching objective numerically, our objective only includes $s$ terms compared to $d$ terms in the original ratio matching objective. In other words, the objective of ratio matching, as shown in Eq. (\ref{Eq:rm_final_loss}), intuitively pushes up the energies of all $\boldsymbol{x}_{-i}$ for $i=1,\cdots,d$, while our RMwGGIS only considers pushing up energies of $s$ terms among them, as formulated by Eq. (\ref{Eq:is_expectation}). Thus, one may wonder \textit{why our objective is effective for learning EBMs without pushing up the energies of all $d$ terms?} In practice, we even observe that RMwGGIS achieves better density modeling performance than ratio matching. We conjecture that this is because our RMwGGIS can lead to better optimization of Eq.~(\ref{Eq:rm_final_loss}) in practice for the following two properties, which are empirically verified in Section~\ref{Sec:exp}.

\textbf{(1) RMwGGIS introduces stochasticity.} Without involving all $d$ terms in the objective function, our method can introduce stochasticity, which could lead to better optimization in practice. This has the same philosophy as the comparison between mini-batch gradient descent and vanilla gradient descent. The gradient is obtained based on each batch in mini-batch gradient descent, while it is computed over the entire dataset in vanilla gradient descent. It is known that the mini-batch gradient descent usually performs better in practice since the stochasticity introduced by mini-batch training could help escape from the saddle points in non-convex optimization~\citep{ge2015escaping}. Therefore, the stochasticity introduced by 
sampling only $s$ terms in RMwGGIS might help the optimization especially when $d$ is large.

\textbf{(2) RMwGGIS focuses on neighbors with low energies.} Even though only energies of $s$ terms are pushed up in our method, these $s$ terms correspond to the neighboring points that have low energies. According to $n^*(\boldsymbol{x}_{-i})$ given by Fact 1, a neighbor of $\boldsymbol{x}$, denoted as $\boldsymbol{x}_{-i}$, is more likely to be sampled if its corresponding energy value is lower~\footnote{Although this only strictly holds for $n^*(\boldsymbol{x}_{-i})$, this can serve as a relaxed explanation for $\widetilde n^*(\boldsymbol{x}_{-i})$ as well since $\widetilde n^*(\boldsymbol{x}_{-i})$ is a good approximation of $n^*(\boldsymbol{x}_{-i})$. More detailed analysis is included in Appendix~\ref{app:why_better_opt}}. Hence, we choose to push up the energies of $s$ neighbors according to their current energies. The lower the energy, the more likely it is to be selected. This is intuitively sound because the terms that have low energies are the most offending terms, which should have the higher priorities to be pushed up. In other words, neighbors with lower energies contribute more to the objective function according to Eq.~(\ref{Eq:rm_final_loss}). That is, the loss values for these neighbors are larger. Thus, RMwGGIS has the same philosophy as hard negative mining, which pays more attention to hard negative samples during training. More detailed explanation about this connection is provided in Appendix~\ref{app:why_better_opt}.

Following the hard negative mining perspective, we observe that the coefficients used in Eq. (\ref{Eq:is_expectation}) provide smaller weights for terms with lower energies. \textcolor{revision}{In other words, among its selected
offending terms (i.e., hard negative samples), it pay least attention to the most offending terms, which is intuitively less effective.} Therefore, we further propose the following advanced version as an objective function by removing the coefficients in Eq. (\ref{Eq:is_expectation}). Formally,
\begin{equation}
\label{Eq:biased_version}
\reallywidehat{\mathcal{J}_{RM}(\boldsymbol{\theta}, \boldsymbol{x})}_{\widetilde n^*}^{adv} = \sum_{t=1}^s \left[e^{E_{\boldsymbol{\theta}}(\boldsymbol{x})-E_{\boldsymbol{\theta}}(\boldsymbol{x}_{-i}^{(t)})} \right]^2, \\ \hfill \quad \boldsymbol{x}_{-i}^{(t)} \sim \widetilde n^*(\boldsymbol{x}_{-i}).
\end{equation}
This advanced version is essentially a heuristic extension of the basic version in Eq.~(\ref{Eq:is_expectation}). It is demonstrated to be more effective in practice. The explanation about this is discussed in Appendix~\ref{app:why_biased_better}.

\section{Related Works}
\label{sec:related_works}

Learning EBMs has been drawing increasing attention recently. Maximum likelihood training with MCMC sampling, also known as contrastive divergence~\citep{hinton2002training}, is the most representative method. It contrasts samples from training set and samples from the model distribution. To draw samples from the model distribution, we can employ MCMC sampling approaches, such as Langevin dynamics~\citep{welling2011bayesian} and Hamiltonian dynamics~\citep{neal2011mcmc}. Such methods are further improved and shown to be effective by recent studies~\citep{xie2016theory,gao2018learning,du2019implicit,nijkamp2019learning,grathwohl2019your,jacob2020unbiased,qiu2019unbiased,du2020improved}. These methods, however, require the gradient \emph{w.r.t.} the data space to update samples in each MCMC step. Thus, they cannot be applied to discrete data directly. To enable maximum likelihood training with MCMC sampling on discrete data, we can naturally use discrete sampling methods, such as Gibbs sampling and Metropolis-Hastings algorithm~\citep{zanella2020informed}, to replace the above gradient-based sampling algorithms. Unfortunately, sampling from a discrete distribution is extremely time-consuming and not scalable. Recently,~\citet{dai2020learning} develops a learnable sampler parameterized as a local discrete search algorithm to propose negative samples for contrasting.~\citet{grathwohl2021oops} proposes a scalable sampling method for discrete distributions by surprisingly using the gradient \emph{w.r.t.} the data space, which inspires our work a lot.


An alternative method for learning EBMs is score matching~\citep{hyvarinen2005estimation,vincent2011connection,song2020sliced,song2019generative}, where the scores, \emph{i.e.}, the gradients of the logarithmic probability distribution \emph{w.r.t.} the data space, of the energy function 
are forced to match the scores of the training data. Ratio matching~\citep{hyvarinen2007some,lyu2009interpretation} is obtained by extending the idea of score matching to discrete data. Our work is motivated by the limitations of ratio matching, as analyzed in Section~\ref{sec:rm_limitation}. Stochastic ratio matching~\citep{dauphin2013stochastic} also aims to make ratio matching more efficient by considering the sparsity of input data. Stochastic ratio matching is limited to sparse data, while our method is effective for general EBMs

\begin{table}
	\caption{Results on $32$-dimensional synthetic discrete data in terms of MMD. The lower the better. The top two results on each dataset are highlighted as \textcolor{orange}{\textbf{1st}} and \textcolor{brown}{\textbf{2nd}}.}
	\label{tab:32d-quantitative}
	\centering
	\resizebox{0.85\columnwidth}{!}{
	\begin{tabular}{lccccccc}
		\toprule
		\textbf{Method} & \textit{2spirals} &\textit{8gaussians} &\textit{circles} &\textit{moons} &\textit{pinwheel} &\textit{swissroll} &\textit{checkerboard} \\
		\midrule
	    Ratio Matching & $0.01514$ & $0.10270$ & $0.11856$ & $\textcolor{brown}{\textbf{0.02901}}$ & $0.31353$ & $0.05820$ & $0.00059$ \\
	    \textbf{RMwGGIS (basic)} & $\textcolor{brown}{\textbf{0.01099}}$ & $\textcolor{brown}{\textbf{0.09763}}$ & $\textcolor{brown}{\textbf{0.11017}}$ & $0.03111$ & $\textcolor{brown}{\textbf{0.27885}}$ & $\textcolor{brown}{\textbf{0.05176}}$ & $\textcolor{brown}{\textbf{0.00050}}$ \\
	    \textbf{RMwGGIS (advanced)} & $\textcolor{orange}{\textbf{0.00876}}$ & $\textcolor{orange}{\textbf{0.08414}}$ & $\textcolor{orange}{\textbf{0.10230}}$ & $\textcolor{orange}{\textbf{0.02787}}$ & $\textcolor{orange}{\textbf{0.26188}}$ & $\textcolor{orange}{\textbf{0.04477}}$ & $\textcolor{orange}{\textbf{0.00026}}$ \\
		\bottomrule
	\end{tabular}
	}
\end{table}

There are some other methods for learning EBMs, such as noise contrastive estimation~\citep{gutmann2010noise,bose2018adversarial,ceylan2018conditional,gao2020flow} and learning the stein discrepancy~\citep{grathwohl2020learning}. We recommend readers to refer to~\citet{song2021train} for a comprehensive introduction on learning EBMs. We note that several works~\citep{elvira2015gradient,schuster2015gradient} use the gradient information of the target distribution to iteratively optimize the proposal distributions for adaptive importance sampling. However, compared to our method, they can only applied to continuous distributions and require expensive iterative processes.

\section{Experiments}
\label{Sec:exp}

\subsection{Density Modeling on Synthetic Discrete Data}
\label{sec:synth_exp}

\textbf{Setup.} For both quantitative results and qualitative visualization, we follow the experimental setting of~\citet{dai2020learning} for density modeling on synthetic discrete data. We firstly draw 2D data points from 2D continuous space according to some unknown distribution $\hat{p}$, which can be naturally visualized. Then, we convert each 2D data point $\hat{\boldsymbol{x}} \in \mathbb{R}^2$ to a discrete data point $\boldsymbol{x} \in \{0,1\}^d$, where $d$ is the desired number of data dimensions. To be specific, we transform each dimension of $\hat{\boldsymbol{x}}$, which is a floating-point number, into a $\frac{d}{2}$-bit Gray code\footnote{\url{https://en.wikipedia.org/wiki/Gray_code}} and concatenate the results to obtain a $d$-bit vector $\boldsymbol{x}$. Thus, the unknown distribution in discrete space is $p(\boldsymbol{x})=\hat{p}\left(\left[\text{GrayToFloat}(\boldsymbol{x}_{1:\frac{d}{2}}),\text{GrayToFloat}(\boldsymbol{x}_{\frac{d}{2}+1:d})\right]\right)$. This density modeling task is challenging since the transformation from $\hat{\boldsymbol{x}}$ to $\boldsymbol{x}$ is non-linear.

To quantitatively evaluate the performance of density modeling, we adopt the maximum mean discrepancy (MMD)~\citep{gretton2012kernel} with a linear kernel corresponding to \textit{($d$-HammingDistance)} to compare distributions. In our case, particularly, the MMD metric is computed based on $4000$ samples, drawn from the learned energy function via Gibbs sampling, and the same number of samples from the training set. Lower MMD indicates that the distribution defined by the learned energy function is closer to the unknown data distribution. In addition, in order to qualitatively visualize the learned energy function, we firstly uniformly obtain $10k$ data points from 2D continuous space. Afterwards, they are converted into bit vectors and evaluated by the learned energy function. Subsequently, we can visualize the obtained corresponding energies in 2D space.

The energy function is parameterized by an MLP with the Swish~\citep{ramachandran2017swish} activation and $256$ hidden dimensions. The number of samples $s$, involved in the objective functions of our RMwGGIS method, is set to be $10$. In the following, we compare our basic and advanced RMwGGIS methods, as formulated in Eq. (\ref{Eq:is_expectation}) and Eq. (\ref{Eq:biased_version}) respectively, with the original ratio matching method~\citep{hyvarinen2007some,lyu2009interpretation}. 

\begin{wrapfigure}{R}{7cm}
\centering
\vspace{-0.12in}
\includegraphics[width=0.5\textwidth]{./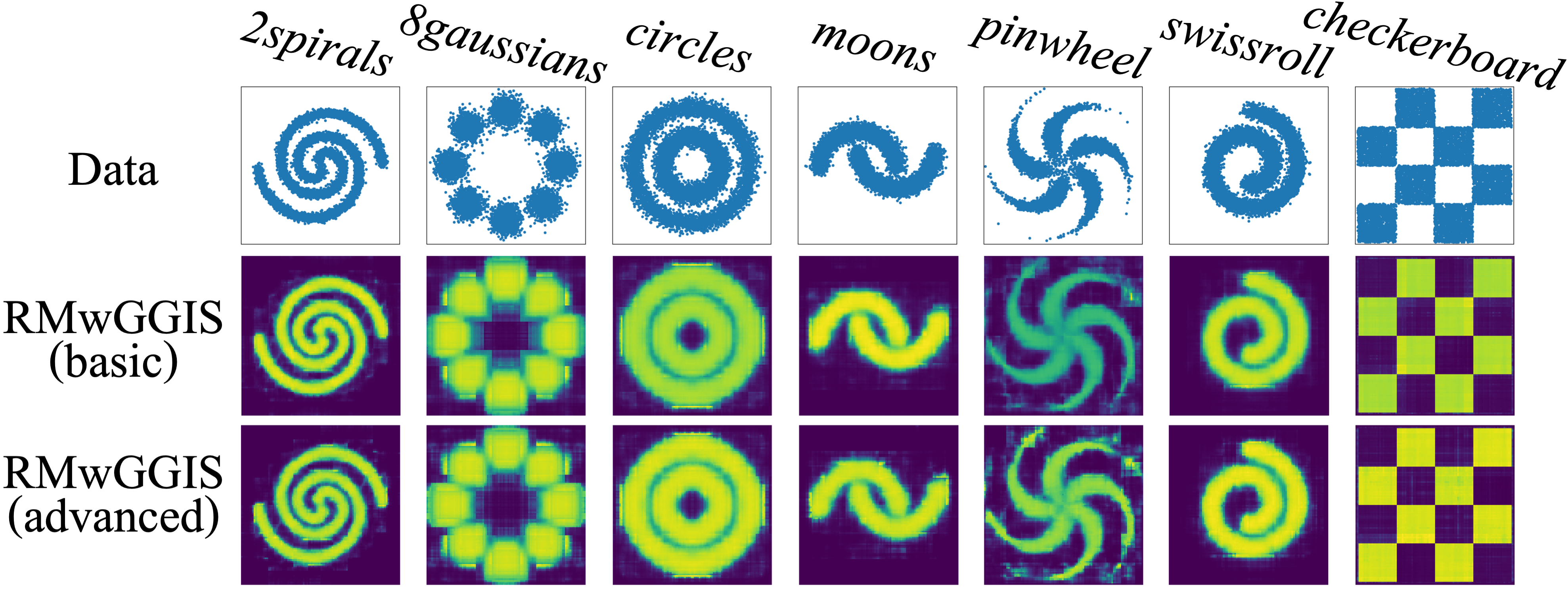}
\vspace{-0.25in} 
\caption{Visualization of learned energy functions on $32$-dimensional synthetic discrete datasets.}
\vspace{-0.15in} 
\label{fig:32d-visualization}
\end{wrapfigure}


\textbf{Quantitative and qualitative results.} The quantitative results on $32$-dimensional datasets are shown in Table~\ref{tab:32d-quantitative}. Our RMwGGIS, including the basic and the advanced versions, consistently outperforms the original ratio matching by large margins, which demonstrates that it is effective for our proposed gradient-guided importance sampling to stochastically push up neighbors with low energies. This verifies our analysis in Section~\ref{Sec:comparison}. In Figure~\ref{fig:32d-visualization}, we qualitatively visualize the learned energy functions of our proposed RMwGGIS. It is observed that EBMs learned by our method can fit the data distribution accurately. Note that we choose $d=32$ for quantitative evaluation because Gibbs sampling cannot obtain appropriate samples from the learned energy function with an affordable time budget if the data dimension is too high, thus leading to invalid MMD results. We will compare the results on higher-dimensional data in the following by observing the qualitative visualization. To further demonstrate that the performance improvement of RMwGGIS over ratio matching is brought by better optimization, we show that energy functions learned with our methods actually lead to lower value for the objective function defined by Eq. (\ref{Eq:rm_final_loss}). The details are included in Appendix~\ref{app:obj_value}.

\textbf{Observations on higher-dimensional data.} As analyzed in Section~\ref{Sec:comparison}, the advantages of our approach can be greater on higher-dimensional data. To evaluate this, we conduct experiments on the $256$-dimensional \textit{2spirals} dataset, and visualize the learned energy functions corresponding to different learning iterations. We construct a ablation method, named as RMwRAND, which estimates the original ratio matching objective by randomly sampling $s=10$ terms. The only difference between our RMwGGIS method and RMwRAND is that we focus more on the terms corresponding to low energies, based on our proposed gradient-guided importance sampling.

As shown in Figure~\ref{fig:256d-visualization}, Appendix~\ref{app:256d-visualization}, our RMwGGIS accurately captures the data distribution, while the original ratio matching method cannot. This further verifies that RMwGGIS leads to better optimization than ratio matching especially when the data dimension is high, as analyzed in Section~\ref{Sec:comparison}. In addition, although RMwRAND can also introduce stochasticity as RMwGGIS by randomly sampling, it fails to capture the data distribution. This observation is intuitively reasonable since randomly pushing up $s=10$ terms among $d=256$ terms leads to large variance. Instead, our RMwGGIS performs well since we focus on pushing up terms with low energies, which are the most offending terms and should be pushed up first. Overall, these experiments can show the superiority of RMwGGIS endowed by our proposed gradient-guided importance sampling on high-dimensional data.

\begin{wrapfigure}{R}{5cm}
\centering
\vspace{-0.15in}
\includegraphics[width=0.35\textwidth]{./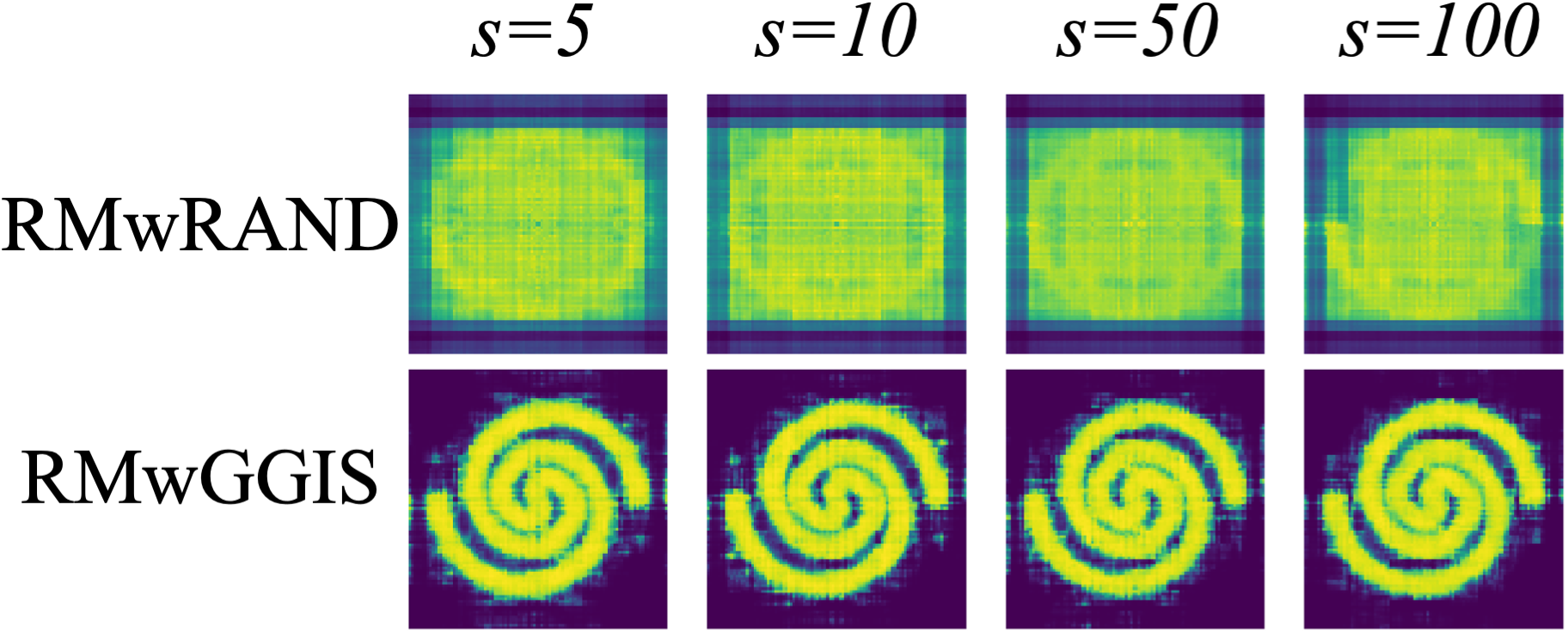}
\vspace{-0.15in} 
\caption{\textcolor{revision}{Comparison of RMwGGIS(advanced) and RMwRAND with several configurations of $s$.}}
\vspace{-0.18in} 
\label{fig:various-s-visualization}
\end{wrapfigure}

\textcolor{revision}{\textbf{Selection of $s$}. Note that we did not perform extensive tuning for $s$, although tuning it might bring performance improvement. To further show the influence of $s$ and the effectiveness of RMwGGIS (advanced) over RMwRAND. We perform an experiment to train both models on 256-dimensional \textit{2spirals} with $s=5, 10, 50, 100$. The visualization of the learned energy functions is shown in Figure~\ref{fig:various-s-visualization}. Our RMwGGIS is observed to achieve good density modeling performance that is robust to the selection of $s$. In contrast, RMwRAND fails to capture the data distribution even with $s=100$. This further demonstrates the effectiveness and robustness of our proposed gradient-guided importance sampling strategy.}

\textbf{Running time and memory usage.} To verify RMwGGIS has better efficiency than ratio matching, we compare the real running time and memory usage on datasets of various dimensions. Specifically, we construct several \textit{2spirals} datasets with different data dimensions and train parameterized energy functions using ratio matching and our RMwGGIS, respectively. We choose batch size to be $256$. The reported time corresponds to the average training time per batch. For RMwGGIS, we report the results of the advanced version.

As summarized in Table~\ref{tab:time_mem}, Appendix~\ref{app:time_mem_comparison}, our RMwGGIS is much more efficient in terms of running time and memory usage, compared to ratio matching. In addition, the improvement is more obvious with the increasing of data dimension. Specifically, compared with ratio matching, our RMwGGIS can achieve $6.2$ times speedup and save $90.1$\% memory usage on the $2048$-dimensional dataset. 


\subsection{Graph Generation}

\textbf{Setup.} We further evaluate our RMwGGIS on graph generation using the \textit{Ego-small} dataset~\citep{you2018graphrnn}. It is a set of one-hop ego graphs, where the number of nodes $4 \leq |V| \leq 18$, obtained from the Citeseer network~\citep{sen2008collective}. Following the experimental setting of~\citet{you2018graphrnn} and ~\citet{liu2019graph}, $80\%$ of the graphs are used for training and the rest for testing. {New graphs can be generated via Gibbs sampling on the learned energy function.} To evaluate the graph generation performance based on the generated graphs and the test graphs, we calculate MMD over three statistics, \emph{i.e.}, degrees, clustering coefficients, and orbit counts, as proposed in~\citet{you2018graphrnn}.

\begin{wraptable}{R}{5.5cm}
\vspace{-0.3in}
	\caption{Graph generation results in terms of MMD. \textit{Avg.} denotes the average over three MMD results.}
	\label{tab:gg-quantitative}
	\centering
	\resizebox{0.4\columnwidth}{!}{
	\begin{tabular}{lccc|c}
		\toprule
		\textbf{Method} & \textit{Degree} &\textit{Cluster} &\textit{Orbit} &\textit{Avg.}  \\
		\midrule
	    GraphVAE & $0.130$ & $0.170$ & $0.050$ & $0.117$ \\
	    DeepGMG & $0.040$ & $0.100$ & $0.020$ & $0.053$ \\
	    GraphRNN & $0.090$ & $0.220$ & $0.003$ & $0.104$ \\
	    GNF & $0.030$ & $0.100$ & $0.001$ & $0.044$ \\
	    EDP-GNN & $0.052$ & $0.093$ & $0.007$ & $0.050$ \\
	    GraphAF & $0.030$ & $0.110$ & $0.001$ & $0.047$ \\
	    GraphDF & $0.040$ & $0.130$ & $0.010$ & $0.060$ \\
	   EBM (GWG) & $0.093$ & $0.027$ & $0.053$ & ${0.058}$ \\
	    \midrule
	    Ratio Matching & $0.062$ & $0.066$ & $0.008$ & $0.045$ \\
	    \textbf{RMwGGIS} & $0.044$ & $0.059$ & $0.013$ & $\textbf{0.039}$ \\
		\bottomrule
	\end{tabular}
	}
	\vspace{-0.2in}
\end{wraptable}

We parameterize the energy function by a $5$-layer R-GCN~\citep{schlichtkrull2018modeling} model with the Swish activation and $32$ hidden dimensions, whose input is the upper triangle of the graph adjacency matrix. The number of samples $s$ used in our RMwGGIS objective is $50$. We apply our advanced version to learn the energy function since it is more effective in Section~\ref{sec:synth_exp}. Besides ratio matching, {we consider the recent proposed method EBM (GWG)~\citep{grathwohl2021oops} as a baseline.} We also consider the recent works developed for graph generation as baselines, including GraphVAE~\citep{simonovsky2018graphvae}, DeepGMG~\citep{li2018learning}, GraphRNN~\citep{you2018graphrnn}, GNF~\citep{liu2019graph}, EDP-GNN~\citep{niu2020permutation}, GraphAF~\citep{shi2019graphaf}, and GraphDF~\citep{luo2021graphdf}. The detailed setup is provided in Appendix~\ref{app:gen_exp_details}

\textbf{Quantitative and qualitative results.} As summarized in Table~\ref{tab:gg-quantitative}, our RMwGGIS outperforms baselines in terms of the average over three MMD results. This shows that our method can learn EBMs to generate graphs that align with various characteristics of the training graphs. The generated samples are visualized in Figure~\ref{fig:gg-visualization-extra}, Appendix~\ref{app:gg_extra_visualization}. It can be observed that the generated samples are realistic one-hop ego graphs that have similar characteristics as the training samples. 

\subsection{{Training Ising Models}}

\begin{wraptable}{R}{10cm}
\vspace{-0.3in}
	\caption{{Comparison of training Ising models.}}
	\label{tab:ising-quantitative}
	\centering
	\resizebox{0.7\columnwidth}{!}{
	\begin{tabular}{lcccccc|c|c}
		\toprule
		MCMC \#Steps & & \textit{5} &\textit{10} &\textit{25} &\textit{50} & \textit{100} & \textcolor{revision}{Ratio Matching} & \textbf{RMwGGIS} \\
		\midrule
	    \multirow{2}{2cm}{log(RMSE)} & EBM (Gibbs) & $-1.60$ & $-1.90$ & $-2.50$ & $-3.00$ & $-3.60$ & \multirow{2}{1cm}{\textcolor{revision}{$-3.99$}} & \multirow{2}{1cm}{$-4.06$} \\
	    & EBM (GWG) & $-4.02$ & $-4.49$ & $-4.87$ & $-4.94$ & $-\textbf{5.05}$ & \\
	    \midrule
	    \multirow{2}{2cm}{Time/iter} & EBM (Gibbs) & $263.5$ms & $437.3$ms & $1113.2$ms & $2524.9$ms & $4670.1$ms  & \multirow{2}{1cm}{\textcolor{revision}{$450.7$ms}}& \multirow{2}{1cm}{$\textbf{13.9}$ms} \\
	    & EBM (GWG) & $37.5$ms & $63.4$ms & $100.5$ms & $222.6$ms & $395.7$ms & \\
		\bottomrule
	\end{tabular}
	}
	\vspace{-0.2in}
\end{wraptable}

To further demonstrate the scaling ability of our method and compare with recent baselines more thoroughly, we use our RMwGGIS to train the Ising model with a 2D cyclic lattice structure, following~\citet{grathwohl2021oops}. We consider a $25\times25$ lattice, thus leading to a $625$-dimension problem, which can be used to evaluate the ability of scaling to high-dimensional problems. We compare methods in terms of the RMSE between the inferred connectivity matrix $\hat{\boldsymbol{J}}$ and the true $\boldsymbol{J}$ and the running time per iteration. The experimental details are included in Appendix~\ref{app:detail_ising}. As shown in Table~\ref{tab:ising-quantitative}, our RMwGGIS is more effective than \textcolor{revision}{ratio matching} and EBM (Gibbs) with various sample steps. The recently proposed EBM (GWG)~\citep{grathwohl2021oops} achieves better RMSE than ours. In terms of running time, our method is much more efficient than baselines since we avoid the expensive MCMC sampling during training \textcolor{revision}{and do not have to consider flipping all dimensions as ratio matching}. According to this experiment, one future direction could be further improving the effectiveness of RMwGGIS while preserving the efficiency advantage.

\section{Conclusion}
\label{sec:conclusion}

We propose ratio matching with gradient-guided importance sampling (RMwGGIS) for learning EBMs on binary discrete data. In particular, we use the gradient of the energy function \emph{w.r.t.} the discrete input space to guide the importance sampling for estimating the original ratio matching objective. We further connect our method with hard negative mining, and obtain the advanced version of RMwGGIS. Compared to ratio matching, our RMwGGIS methods, including the basic and advanced versions, are more efficient in terms of computation and memory usage, and are shown to be more effective for density modeling. Extensive experiments on synthetic data density modeling, graph generation, and Ising model training demonstrate that our RMwGGIS achieves significant improvements over previous methods in terms of both effectiveness and efficiency.


\newpage
\section*{Acknowledgments}

This work was supported in part by National Science Foundation grant IIS-1908220.

\bibliography{my_reference}
\bibliographystyle{iclr2023_conference}

\newpage
\appendix
\section{The Detailed Derivation of Eq. (\ref{Eq:is})}
\label{app:eq_derive}

The detailed derivation of Eq. (\ref{Eq:is}) is as follows.

\begin{equation}
\begin{aligned}
    \mathcal{J}_{RM}(\boldsymbol{\theta}, \boldsymbol{x}) &= d \mathbb{E}_{\boldsymbol{x}_{-i} \sim m(\boldsymbol{x}_{-i})} \left[e^{E_{\boldsymbol{\theta}}(\boldsymbol{x})-E_{\boldsymbol{\theta}}(\boldsymbol{x}_{-i})} \right]^2 \\
    & = d\sum_{i=1}^d m(\boldsymbol{x}_{-i}) \left[e^{E_{\boldsymbol{\theta}}(\boldsymbol{x})-E_{\boldsymbol{\theta}}(\boldsymbol{x}_{-i})} \right]^2 \\
    & = d\sum_{i=1}^d \frac{m(\boldsymbol{x}_{-i}) \left[e^{E_{\boldsymbol{\theta}}(\boldsymbol{x})-E_{\boldsymbol{\theta}}(\boldsymbol{x}_{-i})} \right]^2}{n(\boldsymbol{x}_{-i})}n(\boldsymbol{x}_{-i}) \\
    & = d \mathbb{E}_{\boldsymbol{x}_{-i} \sim n(\boldsymbol{x}_{-i})} \frac{m(\boldsymbol{x}_{-i}) \left[e^{E_{\boldsymbol{\theta}}(\boldsymbol{x})-E_{\boldsymbol{\theta}}(\boldsymbol{x}_{-i})} \right]^2}{n(\boldsymbol{x}_{-i})}. \\
\end{aligned}
\end{equation}

\section{Proof of Fact 1}
\label{app:th1_proof}

\begin{proof}
 According to Eq. (\ref{Eq:is_expectation}), we have 
 \begin{equation}
 \begin{aligned}
    Var \left(\reallywidehat{\mathcal{J}_{RM}(\boldsymbol{\theta}, \boldsymbol{x})}_{n}\right) =  \frac{d^2}{s} Var \left(\frac{m(\boldsymbol{x}_{-i}) \left[e^{E_{\boldsymbol{\theta}}(\boldsymbol{x})-E_{\boldsymbol{\theta}}(\boldsymbol{x}_{-i})} \right]^2}{n(\boldsymbol{x}_{-i})}\right).
\end{aligned}
\end{equation}

Then we can compare the variance of the estimator based on $n^*(\boldsymbol{x}_{-i})$ and $n(\boldsymbol{x}_{-i})$. Formally,

\allowdisplaybreaks
\begin{align}
      & Var \left(\reallywidehat{\mathcal{J}_{RM}(\boldsymbol{\theta}, \boldsymbol{x})}_{n^*}\right) \\
      & = \frac{d^2}{s} Var \left(\frac{m(\boldsymbol{x}_{-i}) \left[e^{E_{\boldsymbol{\theta}}(\boldsymbol{x})-E_{\boldsymbol{\theta}}(\boldsymbol{x}_{-i})} \right]^2}{n^*(\boldsymbol{x}_{-i})}\right) \\
    & = \frac{d^2}{s} \left\{ \mathbb{E}_{\boldsymbol{x}_{-i} \sim n^*(\boldsymbol{x}_{-i})} \left[\frac{m(\boldsymbol{x}_{-i}) \left[e^{E_{\boldsymbol{\theta}}(\boldsymbol{x})-E_{\boldsymbol{\theta}}(\boldsymbol{x}_{-i})} \right]^2}{n^*(\boldsymbol{x}_{-i})}\right]^2 \right. 
     - \left.\left[\mathbb{E}_{\boldsymbol{x}_{-i} \sim n^*(\boldsymbol{x}_{-i})} \frac{m(\boldsymbol{x}_{-i}) \left[e^{E_{\boldsymbol{\theta}}(\boldsymbol{x})-E_{\boldsymbol{\theta}}(\boldsymbol{x}_{-i})} \right]^2}{n^*(\boldsymbol{x}_{-i})}\right]^2\right\}
    \\ & =  \frac{d^2}{s} \left\{ \mathbb{E}_{\boldsymbol{x}_{-i} \sim n^*(\boldsymbol{x}_{-i})} \left[\frac{m(\boldsymbol{x}_{-i}) \left[e^{E_{\boldsymbol{\theta}}(\boldsymbol{x})-E_{\boldsymbol{\theta}}(\boldsymbol{x}_{-i})} \right]^2}{n^*(\boldsymbol{x}_{-i})}\right]^2 - \left[\frac{\mathcal{J}_{RM}(\boldsymbol{\theta}, \boldsymbol{x})}{d}\right]^2\right\} \label{Eq:unbiased}\\
     & =  \frac{d^2}{s} \left\{ \frac{1}{d} \sum_{i=1}^d n^*(\boldsymbol{x}_{-i}) \left[\frac{m(\boldsymbol{x}_{-i}) \left[e^{E_{\boldsymbol{\theta}}(\boldsymbol{x})-E_{\boldsymbol{\theta}}(\boldsymbol{x}_{-i})} \right]^2}{n^*(\boldsymbol{x}_{-i})}\right]^2 - \left[\frac{\mathcal{J}_{RM}(\boldsymbol{\theta}, \boldsymbol{x})}{d}\right]^2\right\} \\
     & =  \frac{d^2}{s} \left\{ \frac{1}{d} \sum_{i=1}^d m^2(\boldsymbol{x}_{-i}) \left[e^{E_{\boldsymbol{\theta}}(\boldsymbol{x})-E_{\boldsymbol{\theta}}(\boldsymbol{x}_{-i})} \right]^2 \sum_{k=1}^d \left[e^{E_{\boldsymbol{\theta}}(\boldsymbol{x})-E_{\boldsymbol{\theta}}(\boldsymbol{x}_{-k})} \right]^2 - \left[\frac{\mathcal{J}_{RM}(\boldsymbol{\theta}, \boldsymbol{x})}{d}\right]^2\right\} \label{Eq:choose_opt}\\
     & =  \frac{d^2}{s} \left\{ \frac{1}{d} \left[ \sum_{i=1}^d m(\boldsymbol{x}_{-i}) \left[e^{E_{\boldsymbol{\theta}}(\boldsymbol{x})-E_{\boldsymbol{\theta}}(\boldsymbol{x}_{-i})} \right]^2\right]^2 - \left[\frac{\mathcal{J}_{RM}(\boldsymbol{\theta}, \boldsymbol{x})}{d}\right]^2\right\} \label{Eq:m_hold}\\
     & =  \frac{d^2}{s} \left\{ \frac{1}{d} \left[ \sum_{i=1}^d \frac{m(\boldsymbol{x}_{-i}) \left[e^{E_{\boldsymbol{\theta}}(\boldsymbol{x})-E_{\boldsymbol{\theta}}(\boldsymbol{x}_{-i})} \right]^2}{n(\boldsymbol{x}_{-i})} \sqrt{n(\boldsymbol{x}_{-i})}\sqrt{n(\boldsymbol{x}_{-i})}\right]^2 - \left[\frac{\mathcal{J}_{RM}(\boldsymbol{\theta}, \boldsymbol{x})}{d}\right]^2\right\} \\
     &  \leq  \frac{d^2}{s} \left\{ \frac{1}{d} \sum_{i=1}^d  \left[ \frac{m(\boldsymbol{x}_{-i}) \left[e^{E_{\boldsymbol{\theta}}(\boldsymbol{x})-E_{\boldsymbol{\theta}}(\boldsymbol{x}_{-i})} \right]^2}{n(\boldsymbol{x}_{-i})} \sqrt{n(\boldsymbol{x}_{-i})}\right]^2 \sum_{i=1}^d n(\boldsymbol{x}_{-i}) - \left[\frac{\mathcal{J}_{RM}(\boldsymbol{\theta}, \boldsymbol{x})}{d}\right]^2\right\} \label{Eq:cauchy}\\
     & =  \frac{d^2}{s} \left\{ \frac{1}{d} \sum_{i=1}^d n(\boldsymbol{x}_{-i}) \left[\frac{m(\boldsymbol{x}_{-i}) \left[e^{E_{\boldsymbol{\theta}}(\boldsymbol{x})-E_{\boldsymbol{\theta}}(\boldsymbol{x}_{-i})} \right]^2}{n(\boldsymbol{x}_{-i})}\right]^2 - \left[\frac{\mathcal{J}_{RM}(\boldsymbol{\theta}, \boldsymbol{x})}{d}\right]^2\right\} \\
     & =  \frac{d^2}{s} \left\{ \mathbb{E}_{\boldsymbol{x}_{-i} \sim n(\boldsymbol{x}_{-i})} \left[\frac{m(\boldsymbol{x}_{-i}) \left[e^{E_{\boldsymbol{\theta}}(\boldsymbol{x})-E_{\boldsymbol{\theta}}(\boldsymbol{x}_{-i})} \right]^2}{n(\boldsymbol{x}_{-i})}\right]^2 - \left[\frac{\mathcal{J}_{RM}(\boldsymbol{\theta}, \boldsymbol{x})}{d}\right]^2\right\} \\
     & =  Var \left(\reallywidehat{\mathcal{J}_{RM}(\boldsymbol{\theta}, \boldsymbol{x})}_{n}\right).
\end{align}

Eq. (\ref{Eq:unbiased}) can be derived because the estimator is unbiased no matter what proposal distribution is applied. Eq. (\ref{Eq:choose_opt}) is obtained by choosing $n^*(\boldsymbol{x}_{-i})=\frac{\left[e^{E_{\boldsymbol{\theta}}(\boldsymbol{x})-E_{\boldsymbol{\theta}}(\boldsymbol{x}_{-i})} \right]^2}{\sum_{k=1}^d \left[e^{E_{\boldsymbol{\theta}}(\boldsymbol{x})-E_{\boldsymbol{\theta}}(\boldsymbol{x}_{-k})} \right]^2}$. Eq. (\ref{Eq:m_hold}) holds since $m(\boldsymbol{x}_{-i})=\frac{1}{d}$ for $i=1,\cdots,d$. To derive Eq. (\ref{Eq:cauchy}), we apply the Cauchy-Schwarz inequality $\left(\sum_{i=1}^d a_i b_i\right)^2 \leq \left(\sum_{i=1}^d a_i^2\right) \left(\sum_{i=1}^d b_i^2\right)$. This completes the proof of Fact 1.

\section{Connection with Hard Negative Mining}
\label{app:why_better_opt}

Here, we provide an insight to understand why the second property, described in Section~\ref{Sec:comparison}, can lead to better optimization, by connecting it with hard sample mining.

Hard sample mining~\citep{felzenszwalb2009object, rowley1998neural} has been widely applied to train deep neural networks~\citep{shrivastava2016training}. Our RMwGGIS is particularly highly related to hard negative training strategies. The basic idea for hard negative mining is to pay more attention to hard negative samples during training, which can usually achieve better performance since it can reduce false positives.  In our setting of discrete EBMs, each training sample $\boldsymbol{x}$ is a positive sample, and its energy should be pushed down. For each positive sample $\boldsymbol{x}$, all $\boldsymbol{x}_{-i}$ for $i=1,2,\cdots,d$ are negative samples, and their energy should be pushed up. Our RMwGGIS with the specific proposal distribution shown in Eq. (\ref{Eq:approx_pd}) can approximately choose the $\boldsymbol{x}_{-i}$’s that currently have low energies with larger probabilities. This has the same philosophy as hard negative mining. Specifically, in our case, $\boldsymbol{x}_{-i}$’s with low energies are hard negative samples since they are the most offending terms, which are close to the positive sample $\boldsymbol{x}$ and have low energies.

Since our proposal distribution defined in Eq. (\ref{Eq:approx_pd}) approximates the provable optimal proposal distribution given in Theorem 1, our proposal distribution thus \textit{approximately} performs ``hard negative mining''. The natural follow-up question is \textit{how accurate is the approximation and how does it affect the learning process?} We answer this question by analyzing the following two stages during learning, which can intuitively show that our RMwGGIS is technically sound.

\begin{figure}[!h]
\centering
\includegraphics[width=0.8\textwidth]{./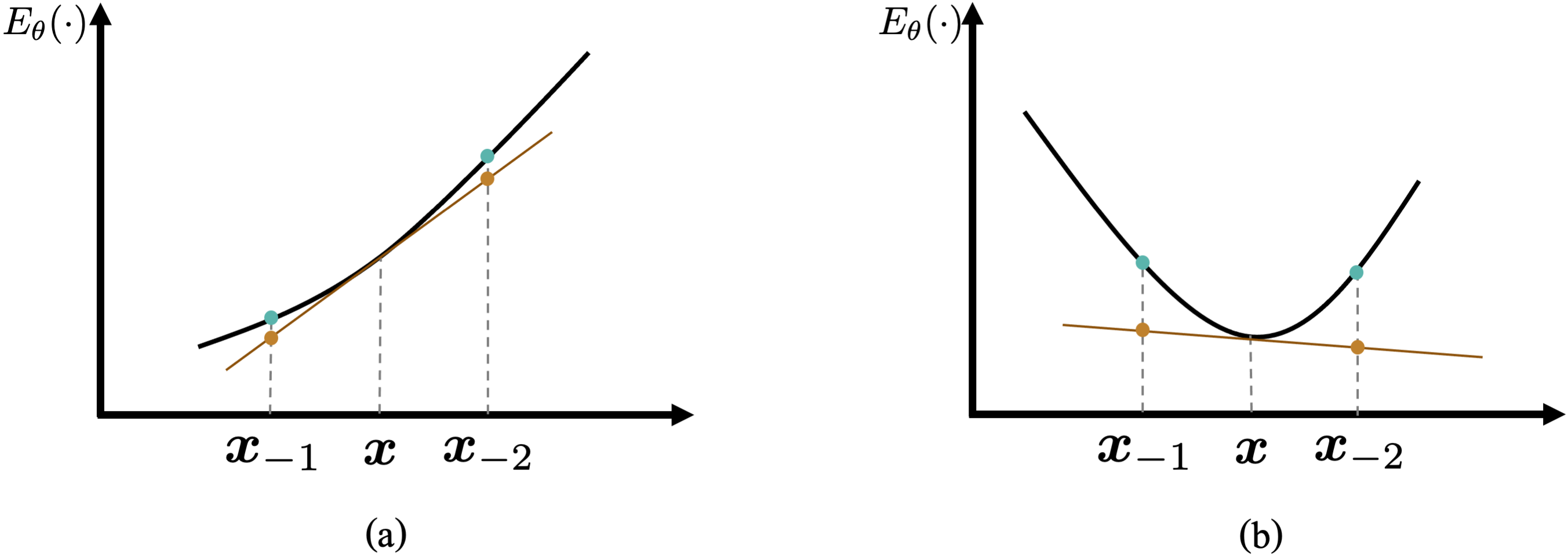}
\caption{An intuitive illustration of the approximation: (a) Stage I and (b) Stage II. The black curves denote the energy functions. Green nodes and brown nodes represent the true energy values and approximated values of neighbors, respectively. Note that the approximated values are obtained based on Taylor series of $E_\theta(\cdot)$ at $\boldsymbol{x}$, as shown in Eq. (\ref{Eq:taylor}). For clarity, we only show two neighbors of $\boldsymbol{x}$ in this figure, but this illustration can also be extended to include all neighbors.}
\label{fig:intuition}
\end{figure}

\textbf{Stage I.} As shown in Figure~\ref{fig:intuition} (a), at the early stage of learning, the energy function is not learned well, thus the energy $E_\theta(\boldsymbol{x})$ of positive sample $\boldsymbol{x}$ is not smaller than its all neighbors. In this case, there are some neighbors of $\boldsymbol{x}$ which have lower energies than $E_\theta(\boldsymbol{x})$, such as $\boldsymbol{x}_{-1}$ in Figure~\ref{fig:intuition} (a). Therefore, ``hard negative mining'' is in demand in this stage. Under this situation, our approximated energies of neighbors could help to perform ``hard negative mining''. To be specific, the estimated energies of neighbors are close to the true energies, and the estimated energy of $\boldsymbol{x}_{-1}$ is much lower than the estimated energy of $\boldsymbol{x}_{-2}$. Thus, our proposal distribution will sample $\boldsymbol{x}_{-1}$ with a higher probability. This actually works as ``hard negative mining'' since $\boldsymbol{x}_{-1}$ is the current most offending term, \emph{i.e.}, the so-called hard negative sample.

\textbf{Stage II.} After learning for a while, we can obtain a relatively good energy function, where the positive sample $\boldsymbol{x}$ locates in the low energy area compared to its local neighbors. In this case our approximation is less accurate. Fortunately, in this case, ``hard negative mining'' is not that necessary since there do not exist many offending terms. Specifically, as shown in Figure~\ref{fig:intuition} (b), the energies of $\boldsymbol{x}_{-1}$ and $\boldsymbol{x}_{-2}$ are safely higher than $E_\theta(\boldsymbol{x})$.

Even though the above analysis is based on a simplified example, we believe it can serve as a good intuitive understanding of why our RMwGGIS performs better than ratio matching.

\section{Why Does the Advanced Version Perform Better?}
\label{app:why_biased_better}

Here, we provide an intuitive explanation on why our advanced version usually performs better than the basic version.

Following our analysis of the connection between our method and hard negative mining, as described in Appendix~\ref{app:why_better_opt}, it is obvious that both basic version (\emph{i.e.}, Eq. (\ref{Eq:is_expectation})) and advanced version (\emph{i.e.}, Eq. (\ref{Eq:biased_version})) perform ``hard negative mining''. The difference lies in the coefficients for different terms. Specifically, the advanced version gives the same weights to all sampled terms. In contrast, the basic version provides a weight $\frac{m(\boldsymbol{x}_{-i}^{(t)})}{n(\boldsymbol{x}_{-i}^{(t)})}$ to each sampled term $\boldsymbol{x}_{-i}^{(t)}$, as shown in Eq. (\ref{Eq:is_expectation}). Note that $m(\boldsymbol{x}_{-i}^{(t)})=\frac{1}{d}$ for all terms and $n(\boldsymbol{x}_{-i}^{(t)})$ would be larger if $\boldsymbol{x}_{-i}^{(t)}$ has lower energy. Hence, the basic version provides smaller weights for terms with lower energies. In other words, among its selected offending terms (\emph{i.e.}, hard negative samples), it pay least attention to the most offending terms, which could be less effective than the advanced version with equal weights. This could explain why our advanced RMwGGIS usually performs better than the basic version.

\section{Comparison of Achieved Objective Values}
\label{app:obj_value}

Specifically, for all the learned energy functions in Table~\ref{tab:32d-quantitative}, we sample $4000$ data points on each dataset and evaluate the resulting objective value defined by Eq. (\ref{Eq:rm_final_loss}). The results are summarized in Table~\ref{tab:obj_value}. We can observe that our basic and advanced RMwGGIS indeed achieve lower objective values, which further demonstrates that our proposed RMwGGIS can lead to better optimization. This is quite natural and straightforward since our method focus on neighbors with low energies. Specifically, according to the objective function of ratio matching (\emph{i.e.}, Eq.~(\ref{Eq:rm_final_loss})), neighbors with lower energies contribute more to the objective function. In other words, the loss values for these neighbors are larger. This explains why our methods, focusing on reducing the loss values (\emph{i.e.}, pushing up energies) of these neighbors, indeed lead to lower value for the objective function defined by Eq.~(\ref{Eq:rm_final_loss}). 

\begin{table}[t]
	\caption{Comparison of resulting objective values. The top two lowest values on each dataset are highlighted as \textcolor{orange}{\textbf{1st}} and \textcolor{brown}{\textbf{2nd}}.}
	\label{tab:obj_value}
	\centering
	\resizebox{\columnwidth}{!}{
	\begin{tabular}{lccccccc}
		\toprule
		\textbf{Method} & \textit{2spirals} &\textit{8gaussians} &\textit{circles} &\textit{moons} &\textit{pinwheel} &\textit{swissroll} &\textit{checkerboard} \\
		\midrule
	    Ratio Matching & $46.02$ & $39.26$ & $31.82$ & $28.57$ & $28.50$ & $37.52$ & $\textcolor{orange}{\textbf{26.05}}$ \\
	    \textbf{RMwGGIS (basic)} & $\textcolor{orange}{\textbf{26.84}}$ & $\textcolor{brown}{\textbf{30.15}}$ & $\textcolor{brown}{\textbf{29.89}}$ & $\textcolor{brown}{\textbf{27.80}}$ & $\textcolor{brown}{\textbf{28.08}}$ & $\textcolor{brown}{\textbf{32.20}}$ & $26.09$ \\
	    \textbf{RMwGGIS (advanced)} & $\textcolor{brown}{\textbf{27.15}}$ & $\textcolor{orange}{\textbf{29.31}}$ & $\textcolor{orange}{\textbf{29.54}}$ & $\textcolor{orange}{\textbf{27.75}}$ & $\textcolor{orange}{\textbf{27.30}}$ & $\textcolor{orange}{\textbf{29.45}}$ & $\textcolor{brown}{\textbf{26.06}}$ \\
		\bottomrule
	\end{tabular}
	}
\end{table}

\section{Visualization of Learned Energy Functions on Higher-Dimensional Data}
\label{app:256d-visualization}

The learned energy functions \emph{w.r.t.} number of learning iterations on the $256$-dimensional \textit{2spirals} dataset are qualitatively visualized in Figure~\ref{fig:256d-visualization}

\begin{figure}[t]
\centering
\includegraphics[width=0.9\textwidth]{./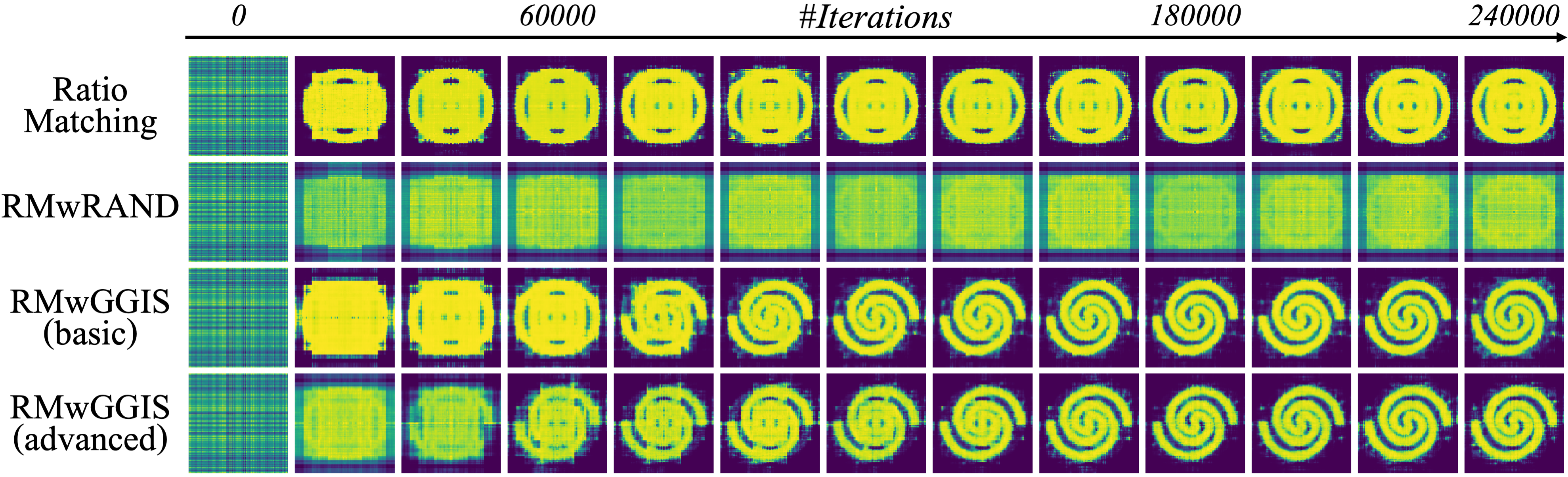}
\caption{Visualization of learned energy functions \emph{w.r.t.} number of learning iterations on the $256$-dimensional \textit{2spirals} dataset.}
\label{fig:256d-visualization}
\end{figure}

\section{Comparison of Running Time and Memory Usage}
\label{app:time_mem_comparison}

The comparison of running time and memory usage are summarized in Table~\ref{tab:time_mem}.

\begin{table}
	\caption{Comparison between ratio matching and our RMwGGIS on \textit{2spirals} datasets with different dimensions in terms of running time and memory usage.}
	\label{tab:time_mem}
	\centering
	\resizebox{\columnwidth}{!}{
	\begin{tabular}{clccccccc}
		\toprule
		 & \# \textit{Data Dimensions} & \textit{32} &\textit{64} &\textit{128} &\textit{256} &\textit{512} &\textit{1024} &\textit{2048} \\
		\midrule
	    \multirow{3}{*}{\rotatebox[origin=c]{90}{Time}} & Ratio Matching & $63.9$ms & $106.5$ms & $185.6$ms & $372.9$ms & $735.2$ms & $1390.1$ms & $2684.1$ms \\
	     & \textbf{RMwGGIS} & $\textbf{41.2}$ms & $\textbf{47.2}$ms & $\textbf{58.8}$ms & $\textbf{86.9}$ms & $\textbf{137.7}$ms & $\textbf{244.9}$ms & $\textbf{434.1}$ms \\
	     & \textbf{Speedup} & $\textbf{1.6}\times$ & $\textbf{2.3}\times$  & $\textbf{3.2}\times$  & $\textbf{4.3}\times$  & $\textbf{5.3}\times$  & $\textbf{5.7}\times$  & $\textbf{6.2}\times$  \\
	    \midrule
	    \multirow{3}{*}{\rotatebox[origin=c]{90}{Memory}} & Ratio Matching & $957$MB & $1031$MB & $1189$MB & $1545$MB & $2315$MB & $4237$MB & $9633$MB \\
	     & \textbf{RMwGGIS} & $\textbf{891}$MB & $\textbf{893}$MB & $\textbf{893}$MB & $\textbf{915}$MB & $\textbf{919}$MB & $\textbf{931}$MB & $\textbf{951}$MB \\
	     & \textbf{Memory Saving}  & $\textbf{6.9}\%$ & $\textbf{13.4}\%$ & $\textbf{24.9}\%$ & $\textbf{40.8}\%$ & $\textbf{60.3}\%$ & $\textbf{78.0}\%$ & $\textbf{90.1}\%$ \\
		\bottomrule
	\end{tabular}
	}
\end{table}

\section{Detailed Settings of Graph Generation}
\label{app:gen_exp_details}

The results of baselines in Table~\ref{tab:gg-quantitative} are reported from~\citet{you2018graphrnn},~\citet{liu2019graph},~\citet{niu2020permutation},~\citet{shi2019graphaf}, and~\citet{luo2021graphdf}. We obtain the result of EBM (GWG) by using their official implementation, and the detailed settings are provided as follows.

We use the official open-sourced implementation\footnote{\url{https://github.com/wgrathwohl/GWG_release}} of EBM (GWG) to perform its graph generation experiment. We train models with persistent contrastive divergence~\citep{tieleman2008training} with a buffer size of $200$ samples. We use the Adam optimizer~\citep{kingma2014adam} with a learning rate of $1e$-$4$ and a batch size of $200$. The following hyperparameters are tuned and the finally chosen ones are underlined: buffer initialization rate $\in \{0, 0.2, \underline{0.4}, 0.6\}$ and MCMC steps $\in$ $\{100, 200, \underline{500}, 1000, 2000\}$.

\section{Visualization of Generated Graphs}
\label{app:gg_extra_visualization}

Generated graph samples are shown in Figure~\ref{fig:gg-visualization-extra}.

\begin{figure}[!h]
\centering
\includegraphics[width=\textwidth]{./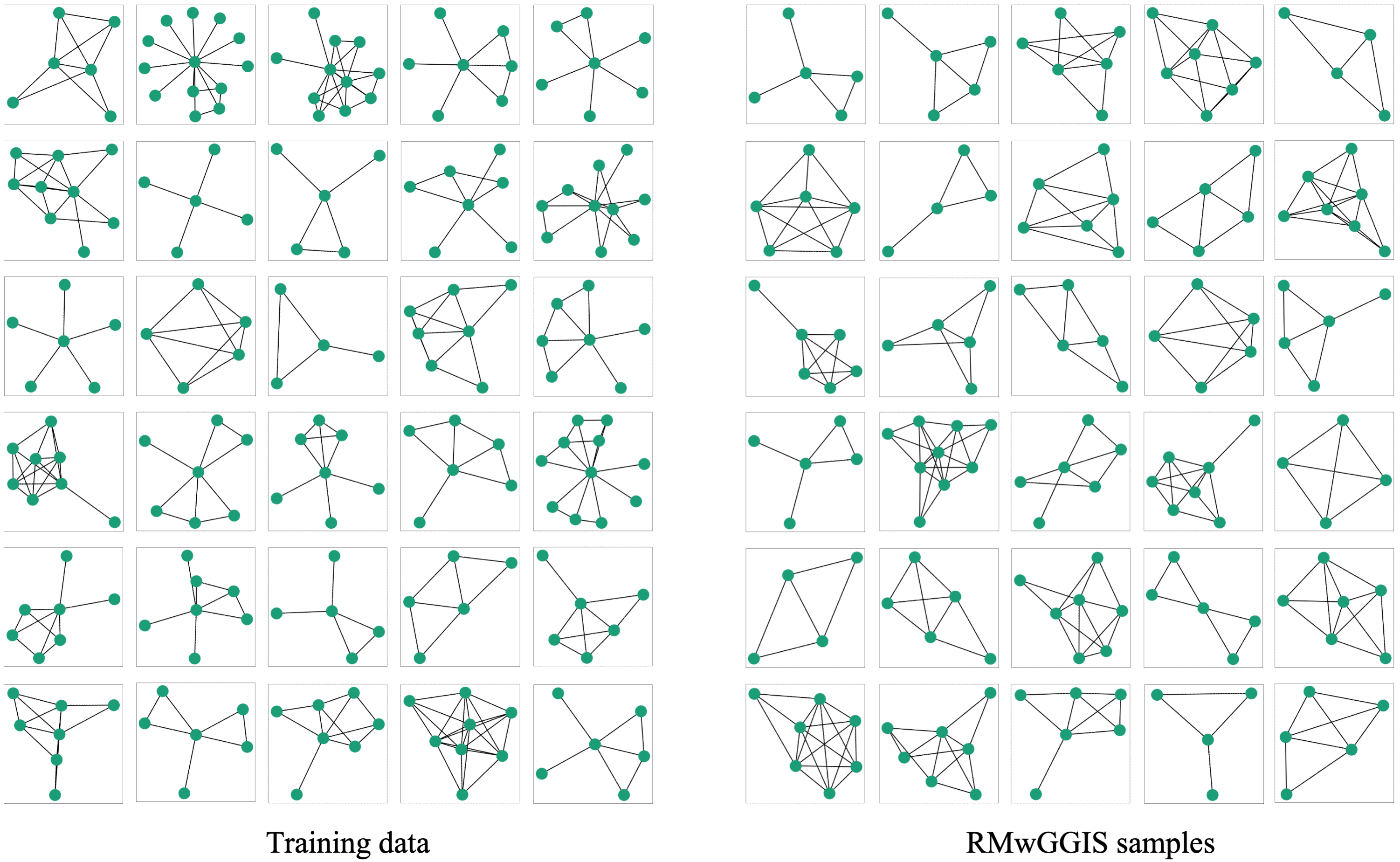}
\vspace{-0.3in}
\caption{Visualization of training data and samples drawn from the energy function learned by our RMwGGIS for graph generation.}
\label{fig:gg-visualization-extra}
\end{figure}

\section{Details of Training Ising Models}
\label{app:detail_ising}

\textbf{Lattice Ising Models.} Ising models~\citep{cipra1987introduction} is firstly developed to model the spin magnetic particles~\citep{1925ising}. For Ising models, our energy function can be naturally defined as
\begin{equation}
\begin{aligned}
    E(\boldsymbol{x})=-\boldsymbol{x}^T\boldsymbol{J}\boldsymbol{x}-\boldsymbol{b}^T\boldsymbol{x},
\end{aligned}
\end{equation}
where $\boldsymbol{J}$ and $\boldsymbol{b}$ are the parameters. $\boldsymbol{J}$ is the connectivity matrix which indicates the correlation across dimensions in $\boldsymbol{x}$. We follow one specific setting in~\citet{grathwohl2021oops}, where all of the non-zero entries of $\boldsymbol{J}$ are identical (denoted as $\sigma$) and $\boldsymbol{J}$ is the adjacency matrix of a cyclic 2D lattice structure. Therefore,
\begin{equation}
\begin{aligned}
    E(\boldsymbol{x})=-\sigma\boldsymbol{x}^T\boldsymbol{J}\boldsymbol{x}-\boldsymbol{b}^T\boldsymbol{x}.
\end{aligned}
\end{equation}

\textbf{Setup.} We follow~\citet{grathwohl2021oops} for our experimental setting. To be specific, we create a model using a $25\times25$ lattice and $\sigma=0.25$, thus leading to a $625$ dimensional distribution. For training the model, $2000$ examples are generated via $1,000,000$ steps of Gibbs sampling. We apply our proposed RMwGGIS method to train the model. The number of samples $s$ used in our RMwGGIS objective is set to $10$. We use Adam optimizer~\citep{kingma2014adam} with a learning rate of $1e$-$4$ and a batch size of $100$. $\ell1$ penalty with strength $0.01$ is used to encourage sparsity. In terms of baselines, \textcolor{revision}{in addition to ratio matching}, we further consider the approaches which train discrete EBMs with persistent contrastive divergence~\citep{tieleman2008training}. The number of steps for MCMC per training iteration is $\in \{5, 10, 25, 50, 100\}$. The samplers are Gibbs and Gibbs-With-Gradient (GWG)~\citep{grathwohl2021oops}. Results of EBM (GWG) are obtained by running the official implementation from~\citet{grathwohl2021oops}. Results of EBM (Gibbs) are obtained by reading from Figure 6 in~\citet{grathwohl2021oops}.

\textbf{Evaluation.} We evaluate the performance by computing the root-mean-squared-error (RMSE) between the learned connectivity matrix $\hat{\boldsymbol{J}}$ and the true matrix $\boldsymbol{J}$. In addition, we compare the efficiency by reporting the running time for each iteration. To be specific, for comparing efficiency, we use the same batch size $100$ for our method and baselines. The report time is the average over $100$ iterations.

\end{proof}

\end{document}